\documentclass[letterpaper, 10pt, conference]{ieeeconf} 

\usepackage{balance}
\usepackage{mathrsfs}
\usepackage[latin1]{inputenc}
\usepackage{mathtools}
\usepackage{amssymb}
\usepackage{amsmath}
\usepackage{amsfonts}
\usepackage{balance}
\usepackage{authblk}    
\usepackage{makeidx}    
\usepackage{graphicx}       
\usepackage{amsfonts}
\usepackage{psfrag}
\usepackage{setspace}
\usepackage{epstopdf}
\usepackage{bm}
\usepackage{changes}
\usepackage{nicefrac}
\usepackage{adjustbox}
\usepackage{pgf-pie}
\usepackage{fixmath}
\usepackage{soul}
\usepackage{epsfig}
\usepackage{array}
\usepackage{graphics}
\usepackage{relsize}
\usepackage{arydshln}
\usepackage{algorithm}
\usepackage{algpseudocode}
\usepackage{mathtools}

\DeclareMathOperator{\tr}{tr}
\newtheorem{dfn}{Definition}

\newtheorem{lemma}[dfn]{Lemma}

\usepackage{bbm}

\IEEEoverridecommandlockouts
\overrideIEEEmargins 
\pdfminorversion=4

\begin{document}

\title{Adaptive Out-of-Control Point Pattern Detection in \\Sequential Random Finite Set Observations}

\author{Konstantinos~Bourazas$^\dagger$, Savvas~Papaioannou, and Panayiotis~Kolios
\thanks{The authors are with the $\prescript{\dagger}{}{\text{Department of Economics}}$, Athens University of Economics and Business, Athens, Greece, and the KIOS Research and Innovation Centre of Excellence (KIOS CoE), University of Cyprus, Nicosia, 1678, Cyprus. E-mail :{ \tt\small kbourazas@aueb.gr, \{papaioannou.savvas, pkolios \}@ucy.ac.cy}. %
This work is implemented under the Border Management and Visa Policy Instrument (BMVI) and is co-financed by the European Union and the Republic of Cyprus (GA:BMVI/2021-2022/SA/1.2.1/015), and also supported by the European Union's H2020 research and innovation programme under grant agreement No 739551 (KIOS CoE - TEAMING) and through the Deputy Ministry of Research, Innovation and Digital Policy of the Republic of Cyprus.
}}

\maketitle

\begin{abstract}
In this work we introduce a novel adaptive anomaly detection framework specifically designed for monitoring sequential random finite set (RFS) observations. Our approach effectively distinguishes between In-Control data (normal) and Out-Of-Control data (anomalies) by detecting deviations from the expected statistical behavior of the process. The primary contributions of this study include the development of an innovative RFS-based framework that not only learns the normal behavior of the data-generating process online but also dynamically adapts to behavioral shifts to accurately identify abnormal point patterns. To achieve this, we introduce a new class of RFS-based posterior distributions, named Power Discounting Posteriors (PD), which facilitate adaptation to systematic changes in data while enabling anomaly detection of point pattern data through a novel predictive posterior density function. The effectiveness of the proposed approach is demonstrated by extensive qualitative and quantitative simulation experiments.
\end{abstract}

\section{Introduction}

An anomaly is an observation or event that significantly
diverges from expected patterns. Anomaly detection encompasses
a range of methodologies aimed at
monitoring processes (e.g., manufacturing, industrial,
and financial) and identifying such deviations, which
may indicate potential problems or inefficiencies. Anomaly
detection finds widespread application in diverse domains,
including fraud detection (credit cards, insurance, healthcare),
cybersecurity, and fault discovery in critical systems
among others. This broad applicability arises from the crucial,
actionable insights anomalies often reveal. For instance, unusual network traffic may signal security breaches \cite{Kumar2005}, abnormal medical findings may indicate malignancy \cite{Spence2001}, and atypical transactions may reveal financial fraud \cite{Bolton2002}. Despite decades of research and maturity in some domains, detecting anomalies remains challenging, particularly in stochastic point pattern data \cite{diggle2013statistical}, where both the number and nature of observations are random. Most existing statistical \cite{chandola2009anomaly,samariya2023comprehensive} and data-driven \cite{pang2021deep} methods assume fixed-size input structures (e.g., vectors, matrices), limiting their effectiveness in such settings.

Point patterns, however, consist of random sets of random vectors, with both spatial configuration and cardinality varying across instances. This variability renders traditional methods largely ineffective. To address this, Random Finite Set (RFS) theory \cite{mahler2014advances,mahler2007statistical} offers a robust framework for handling random set-valued data and parameters. RFS is particularly suited for inference tasks involving an unknown and varying number of observations, with applications in state estimation \cite{VoBaTuong,Wei2021,papaioannou2019jointly}, tracking \cite{BaNguVo2005,Wenling2011,papaioannou2019probabilistic}, robotics \cite{Doerr2018,Papaioannou2019,papaioannou2020cooperative}, and surveillance \cite{Yang2019,Papaioannou2021,papaioannou2020IROS}.

However, its application in anomaly detection has been largely overlooked to date. Related to our work are the works in \cite{vo2018model, kamoona2019random, kamoona2021point} and \cite{kamoona2024anomaly}. Specifically, the authors in \cite{vo2018model} address model-based learning for classification, anomaly detection, and clustering of point pattern data. The approaches proposed in \cite{vo2018model} are offline, i.e., they are not designed for observations obtained sequentially, and require a two-stage procedure. A training/calibration phase first learns the model's parameters, and only then is anomaly detection performed. Similarly, offline two-stage RFS-based anomaly detection algorithms are proposed in \cite{kamoona2019random} for hazard detection at construction sites, and in \cite{kamoona2021point} for detecting anomalies in industrial manufacturing. The problem of anomaly detection with RFS observations is also investigated in \cite{kamoona2024anomaly}, where the authors propose a methodology for detecting defects in industrial settings using RFS energy-based models. However, this methodology is not adaptive; the RFS energy-based model is known a priori, and its parameters, learned using offline calibration data, remain fixed thereafter. In contrast, our proposed approach operates within the Bayesian framework, where the model parameters are themselves treated as random variables following (non-informative) conjugate prior distributions, and therefore can be adapted to match the normal behavior of the generating process.

Specifically, we propose a novel adaptive anomaly detection framework for sequential random finite set (RFS) observations. We assume an unknown Poisson RFS process generates sequential point pattern data. Our system learns and monitors the process in an online fashion while determining whether the data remains in statistical control 
\cite{qiu2013introduction}. Data conforming to the expected behavior are considered In-Control (IC), or normal, while deviations signify Out-Of-Control (OOC) data, or anomalies. Anomalies indicate either a temporary deviation from the in-control process behavior or a fundamental shift, necessitating an update to the established process statistical state. Our contributions are the following:

\begin{itemize}
    \item We investigate the problem of anomaly detection with sequential random finite set (RFS) observations, and propose an adaptive anomaly detection framework. This framework learns the process's normal behavior in an on-line fashion, adapting to behavioral shifts, while being capable of detecting out-of-control (OOC) point pattern data (i.e., anomalies) that do not conform to the expected behavior of the process.


    
    \item We introduce the Power Discounting Posteriors (PD), a new class of RFS-based posterior distributions designed for on-line learning and process monitoring. Subsequently, we derive a Bayesian RFS predictive check statistic, allowing robust detection of OOC data points, or anomalies.

    \item Finally, the effectiveness of the proposed approach is demonstrated through extensive qualitative and quantitative simulation experiments.
\end{itemize}

The paper is organized as follows. In Section \ref{sec:preliminaries}, we provide the background on the theory of random finite sets. In Section \ref{sec:problem}, we formulate the problem tackled in this work, and subsequently, in Section \ref{sec:approach}, we discuss the details of the proposed approach. Finally, in Section \ref{sec:sim}, we evaluate the proposed approach, and in Section \ref{sec:disc}, we conclude the paper.


\section{Preliminaries} \label{sec:preliminaries}

\subsection{Random Finite Sets} \label{subsec:rfs}
A random finite set (RFS) is a finite-set-valued random variable where both the number of elements in the set and the values of these elements are random. Therefore, the main difference between a random finite set and a random vector lies in two aspects for the former: firstly, the number of elements, denoted as $n$, is itself random, and secondly, these elements are not only random but also unordered.

More specifically, an RFS $X$ is completely specified by: a) its cardinality distribution $\rho(n) = p(|X|=n),~ n \in \mathbb{N}$, which defines the probability distribution over the number of elements in $X$, and b) by a family of conditional joint probability distributions $p(x_1,\ldots,x_n|n)$ that characterize the distribution (i.e., spatial density) of its elements $x_1,\ldots,x_n \in \mathcal{X}$ over the state space $\mathcal{X}$. The probability density function (pdf) $f(X)=f(\{x_1,\ldots,x_n\})$ of the RFS $X$, with $f: \mathcal{F}(\mathcal{X}) \rightarrow [0,\infty)$ where $\mathcal{F}(\mathcal{X})$ is the space of all finite subsets of $\mathcal{X}$, is given by:
\begin{equation} \label{eq:rfs_lik1}
    f(\{x_1,\ldots,x_n\}) = \rho(n) \sum_{\varpi} p(x_{\varpi(1)},\ldots,x_{\varpi(n)}|n)
\end{equation}

\noindent where $\varpi$ is the permutation of the elements $\{1,\ldots,n\}$, and is used as shown above to account for the fact that the elements in $X$ are unordered. Moreover, for joint symmetric conditional densities, Eq.~\eqref{eq:rfs_lik1} simplifies to:

\begin{equation} \label{eq:rfs_lik2}
    f(\{x_1,\ldots,x_n\}) = \rho(n) n! p(x_{1},\ldots,x_{n}|n)
\end{equation}

\noindent where $n!$ denotes the $n$-factorial, and $p(x_{1},\ldots,x_{n}|n)$ is a symmetric density, meaning that its value remains unchanged for all of the possible $n!$ permutations of its input variables. The notion of integration is then given by the set-integral, which is defined as:
\begin{equation}
    \int f(X) d X = f(\emptyset) + \sum^{\infty}_{n=1} \frac{1}{n!} \int f(\{x_1,\ldots,x_n\}) \, dx_1 \ldots dx_n
\end{equation}
\noindent where by convention $f(\emptyset) = \rho(0)$. Finally, it is straightforward to show that when the elements $x \in X$ of the RFS $X$ are independent and identically distributed (iid) according to the probability density $p(x)$ on $\mathcal{X}$, the pdf of $X$ is given by:
\begin{equation} \label{eq:lik1}
    f(X) = \rho(n) n! \prod_{x \in X} p(x)
\end{equation}
Additionally, when the cardinality $n$ follows a Poisson distribution with parameter $\lambda$, referred to as Pois($\lambda$), i.e., the cardinality distribution is given by $\rho(n) = \frac{e^{-\lambda}\lambda^n}{n!}$, the RFS $X$ becomes a Poisson RFS with density given by:
\begin{equation}\label{eq:RFS_lik_Poisson}
    f(X) = e^{-\lambda}\prod_{x \in X} \kappa(x)
\end{equation}
\noindent where $\kappa(x) = \lambda p(x)$ is called the intensity function of $X$, which when integrated over any closed subset $S \subseteq \mathcal{X}$ gives the expected number of elements $\mathbb{E}[n]$ in $S$ i.e., $\mathbb{E}[n] = \int_S \kappa(x) dx$. The Poisson RFS, due to its importance in diverse application scenarios \cite{streit2010poisson}, serves as the primary focus of this work for anomaly detection. However, the proposed approach can be generalized in other RFS models. It is also worth mentioning that the non-uniformity of the reference measure in the RFS framework causes problems in anomaly detection \cite{vo2018model}. To overcome this problem, the ranking function has been proposed \cite{vo2018model}, which for the iid cluster model is given by:
\begin{equation} \label{eq:rank0}
r(X) \propto \rho(n) \dfrac{\displaystyle{\prod_{x \in X}} p(x)}{\left(\lVert p(x) \rVert_2^2\right)^n},
\end{equation}
where $\lVert \cdot \rVert_2$ is the $L_2$ norm. 

\section{Problem Formulation} \label{sec:problem}

We consider a discrete-time, bounded Poisson RFS process with density governed by Eq. \eqref{eq:RFS_lik_Poisson}. At each time-step $t$, the process generates an RFS observation $X_t = {x_{1,t},\ldots,x_{n,t}}$, where $n_t \sim \text{Pois}(\lambda_t)$ and $x_{i,t} \sim N_{d}(\mu_t, \Sigma_t)$. Anomalies arise from sudden and large changes in the rate parameter or mean vector.  The problem tackled in this work can be stated as follows: 

\textit{Given sequential Poisson RFS observations $\{X_t| t>0\}$, our objective is to dynamically learn the posterior predictive density $f(X_{t+1}|X_{1:t})$ for the subsequent time-step $t+1$, based on all preceding observations $X_{1:t}$ up to time $t$. Following this, we aim to assess the ``extremeness'' in terms of the statistical significance for the newly received RFS observation $X_{t+1}$ in relation to the process' expected behavior, and detect OOC point pattern data (i.e., anomalies) that do not conform to the process' anticipated behavior.}


A high-level overview of the proposed framework is given next and detailed discussion in Sec. \ref{sec:approach}.
Within the Bayesian framework, we consider the rate parameter $\lambda_t$ along with the mean vector $\mu_t$ and the covariance matrix $\Sigma_t$ as unknown. We denote the list of the unknown parameters as $\Theta_t$ i.e., $\Theta_t=( \lambda_t, \mu_t, \Sigma_t)$, that belong in a parametric space $\mathbold{\Theta} = \mathbb{R}^{+} \times \mathbb{R}^{d} \times \mathbb{R}^{d \times d}$. By making use Eqs. \eqref{eq:lik1}-\eqref{eq:RFS_lik_Poisson}, in this work, the likelihood function $f(X_t | \Theta_t)$ of the Poisson RFS process can be computed as:
\begin{align}\label{eq:lik2}
    f(X_t | \Theta_t) &= e^{-\lambda_t} \prod_{j=1}^{n_t} \lambda_t p(x_{j,t} | \mu_t, \Sigma_t)
\end{align}

Our goal is to learn on-line the parameters $\Theta_t$ in Eq. \eqref{eq:lik2} through sequential RFS observations, aiming to ascertain the anticipated behavior of the process. Simultaneously, we seek to identify observations that deviate from this expected behavior (i.e., anomalies), by taking into account the uncertainty of $\Theta_t$. We formulate this uncertainty assuming that $\Theta_t$ follows a prior distribution, denoted as $\pi(\Theta_t)$. The prior distribution represents our belief on the parameters, with the non-informative priors \cite{gelman2013bayesian} being a plausible choice in cases of process ignorance. Given the RFS observations $X_{1:t}$ up to time-step $t$, we compute the posterior density, denoted as $\pi(\Theta_t|X_{1:t})$, which is given by:
\begin{align} \label{eq:post0}
	\pi\left( \Theta_t | X_{1:t} \right) & \propto f \left( X_{1:t} | \Theta_t \right)  \pi(\Theta_t), 
\end{align}
resulting in an updated version of the prior informed by the likelihood. Extending the framework of the classical posterior in the Bayesian approach, we introduce the Power Discounting (PD) posteriors $\pi(\Theta_t|X_{1:t}, \alpha_0)$, further discussed in Sec. \ref{ssec:bayes}. These allow for adaptation to changes in the process's behavior by learning the parameters $\Theta_t$ over time using a discounting factor $\alpha_0$. Subsequently, the posterior predictive distribution $f(X_{t+1} | X_{1:t}, \alpha_0)$, is given by:
{\small
\begin{align}
f(X_{t+1} | X_{1:t}, \alpha_0)  =  \int f(X_{t+1} | \Theta_t) \pi ( \Theta_t | X_{1:t}, \alpha_0) d \Theta_t,
\end{align}}

\noindent which derived by integrating out the unknown parameters with respect to the posterior. Based on this, to determine whether the received RFS observation $X_{t+1}$ at time-step $t+1$ conforms to the process's expected behavior (i.e., is normal or anomalous), we leverage the independence between the cardinality distribution $\rho(n_{t+1})$ and the conditional spatial density  $p(x_{1,t+1},\ldots,x_{n_{t+1},t+1}|n_{t+1})$, to express the posterior in closed form as $\pi(\Theta_t|X_{1:t}, \alpha_0) = \pi(\lambda_t|n_{1:t}, \alpha_0) \pi(\mu_t, \Sigma_t|X_{1:t}, \alpha_0)$. 
Thus, we can derive the posterior predictive distributions of the cardinality, and the spatial density as shown in Eq. \eqref{eq:predictivec}, and  Eq. \eqref{eq:predictivex} respectively:
\begin{align} \label{eq:predictivec}
\rho \left( n_{t+1} | n_{1:t}, \alpha_0 \right)  &=  \int \rho(n_{t+1} | \lambda_t) \pi(\lambda_t|n_{1:t}, \alpha_0) d \lambda_t,
\end{align}
\begin{align} \label{eq:predictivex}
p \left( \bar{x}_{t+1} | X_{1:t}, \alpha_0 \right)  &=  \int \int p(\bar{x}_{t+1} | \mu_t, \Sigma_t) \times \nonumber \\
&\times  \pi(\mu_t, \Sigma_t|X_{1:t}, \alpha_0) d \mu_t d\Sigma_t,
\end{align}

\noindent where $\bar{x}_{t+1} = \sum_{j=1}^{n_{t+1}}x_{j,t+1} / n_{t+1}$ is the sufficient statistic of the mean vector summarizing all the available information from the features. This derivation is   presented in Sec. \ref{ssec:nov}.

Subsequently, we employ posterior predictive checks to assess the statistical significance of a new observation relative to the observed sequence of observations. Specifically, we define two posterior probabilities, denoted as $pr^n_{t+1}$ for the cardinality, and $pr^{x|n}_{t+1}$ for the features given cardinality at time $t+1$, which will play the role of posterior predictive $p$-values \cite{meng1994posterior}, measuring how extreme the observation is for the posterior prediction.

Finally, we assess the presence of an anomaly via the  Fisher's combined probability test \cite{fisher1970statistical} which in this work can be defined as:
\begin{equation} \label{eq:fisher}
    P_{t+1} = - 2  \log \left( pr^{n}_{t+1} \cdot pr^{x|n}_{t+1} \right).  
\end{equation}
For uniformly distributed probabilities $pr^{n}_{t+1}$ and $pr^{x|n}_{t+1}$ (discussed in detail in Sec. \ref{ssec:nov}), the distribution of $P_{t+1}$ under the null hypothesis of non-anomaly follows a chi-squared distribution with $4$ degrees of freedom, i.e., $P_{t+1} \sim \mathcal{X}^2_{4}$. Thus, we raise an alarm if:
\begin{equation} \label{eq:alarm}
    P_{t+1} >  q(1-\alpha), 
\end{equation}
where $q(\cdot)$ is the quantile function of  $\mathcal{X}^2_{4}$ distribution, and $\alpha$ is a predetermined false alarm rate $\alpha$.





\section{Adaptive Anomaly Detection in Sequential RFS Observations} \label{sec:approach}
\subsection{Power Discounting posteriors for RFS Point-pattern Data} \label{ssec:bayes}

In a sequentially updated process, as the observed horizon expands, the posterior distribution becomes more informative. While this can be advantageous in some cases, it may face difficulties in accurately adaptating small systematic changes. Specifically, the accumulation of extensive information on specific values leads to an ``inertia'' effect, wherein the posterior parameters require numerous observations to undergo changes and update to new values. The adaptation process becomes difficult as the posterior distribution becomes increasingly resistant to change. 

For this reason, we introduce a new class of posterior distributions, the Power Discounting posteriors (PD). PD posteriors, via a discounting factor $\alpha_0$, weigh the importance of the received observations, giving more weight to the most recent. The idea of power discounting is not new in Bayesian statistics but in a different set-up \cite{ibrahim2000power}, \cite{west1986bayesian}. PD posteriors differentiate, as they implement sequential power discounting; they are suitable for monitoring, but at the same time, they leverage the temporal significance of the observations to facilitate adaptation to systematic changes, and improve the anomaly detection capability. For the unknown parameters $\Theta_t$, the general form of the PD posterior at time $t$ is: 
\begin{align} \label{eq:post}
	\pi\left( \Theta_t | X_{1:t}, \alpha_0 \right) & \propto f \left( X_{t} | \Theta_t \right) \prod_{i=1}^{t-1} f \left( X_{i} | \Theta_t \right)^{\alpha_0^{t-i}} \hspace{-2.5mm} \pi(\Theta_t)^{\alpha_0^t}, 
\end{align}
\noindent 
The parameter $0 \leq \alpha_0 \leq 1$ is a subjective choice that defines the influence of recent observations on the posterior estimate, with their weight decaying exponentially. It allows the process to account for small, systematic changes, while still detecting anomalies. We preserve conjugacy for computational efficiency, but estimating $\alpha_0$ online from the data is a natural extension. In extreme cases, $\alpha_0=0$ results in pure adaptation, where only the latest observation influences the posterior, while $\alpha_0=1$ assigns equal weight to all observations, making the posterior more informative for history-based monitoring.

Because of the independence between $\lambda_t$ and $\left( \mu_t, \Sigma_t\right)$, their joint prior can be  written as  $\pi \left( \lambda_t, \mu_t, \Sigma_t\right)= \pi \left( \lambda_t \right) \pi \left( \mu_t, \Sigma_t\right)$ (this product holds for the posterior as well, i.e., $\pi(\lambda_t, \mu_t, \Sigma_t|X_{1:t}, \alpha_0) = \pi(\lambda_t|n_{1:t}, \alpha_0) \pi(\mu_t, \Sigma_t|X_{1:t}, \alpha_0)$). 

Lemma \ref{lem:pois} and Lemma \ref{lem:norm} derive the posterior distribution of $\lambda_t$ and the joint posterior distribution of $\left( \mu_t, \Sigma_t\right)$, respectively. 

 \begin{lemma} \label{lem:pois}
    Assuming the prior $\lambda_t \sim G\left(c_0, d_0\right)$, i.e., a Gamma distribution with hyperparameters $c_0$, and $d_0$, then the PD posterior distribution is conjugate, i.e., $\lambda_t | \left(  n_{1:t}, \alpha_0 \right) \sim G\left(c_t, d_t\right)$, with $c_t$, and $d_t$, the updated parameters. 
\end{lemma}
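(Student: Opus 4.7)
The plan is to exploit the independence of $\lambda_t$ and $(\mu_t,\Sigma_t)$ in both the prior and the likelihood so that the PD posterior of Eq.~\eqref{eq:post} factorizes, and then to isolate the $\lambda_t$-marginal, substitute the assumed Gamma prior together with the Poisson cardinality contribution of Eq.~\eqref{eq:lik2}, and recognize the resulting kernel. Since $\lambda_t$ enters $f(X_i\mid\Theta_t)$ only through $e^{-\lambda_t}\lambda_t^{n_i}$, retaining only $\lambda_t$-dependent factors in Eq.~\eqref{eq:post} reduces the conditional posterior to
\begin{equation*}
\pi(\lambda_t \mid n_{1:t}, \alpha_0) \propto \bigl(e^{-\lambda_t}\lambda_t^{n_t}\bigr)\prod_{i=1}^{t-1}\bigl(e^{-\lambda_t}\lambda_t^{n_i}\bigr)^{\alpha_0^{t-i}}\,\pi(\lambda_t)^{\alpha_0^{t}},
\end{equation*}
so the posterior depends on the data only through the cardinality sequence, as expected.

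Next I would substitute the Gamma kernel $\pi(\lambda_t)\propto \lambda_t^{c_0-1}e^{-d_0\lambda_t}$, raise it to the power $\alpha_0^{t}$, and collect the total exponent of $\lambda_t$ and the total coefficient multiplying $\lambda_t$ inside the exponential. This gives a kernel of the form $\lambda_t^{c_t-1}\exp(-d_t\lambda_t)$ with
\begin{align*}
c_t &= \alpha_0^{t}\,c_0 + \sum_{i=1}^{t}\alpha_0^{t-i}\,n_i + \bigl(1-\alpha_0^{t}\bigr),\\
d_t &= \alpha_0^{t}\,d_0 + \sum_{i=1}^{t}\alpha_0^{t-i},
\end{align*}
which is (up to normalization) the density of $G(c_t,d_t)$, proving conjugacy and identifying the updated hyperparameters.

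The argument is almost entirely bookkeeping: the only substantive observation is that both the Poisson likelihood in $\lambda_t$ and the Gamma prior are exponential-family densities, so raising them to nonnegative powers preserves their functional form and a Gamma kernel is guaranteed to emerge. I do not anticipate a genuine obstacle, but the weighting deserves care, since the discount appears simultaneously in the exponent of $\lambda_t$ (through the shape update) and in the linear coefficient (through the rate update), and the prior contributes the non-obvious constant $1-\alpha_0^{t}$ to $c_t$ that traces back to the $-1$ in the prior's shape parameter. As a sanity check, I would verify the limit $\alpha_0\uparrow 1$, which collapses the geometric weights and recovers the standard Gamma--Poisson update $c_t=c_0+\sum_{i=1}^{t}n_i$, $d_t=d_0+t$, and confirm $c_t,d_t>0$ for all $t$ whenever $c_0,d_0>0$, so that the posterior is a proper density.
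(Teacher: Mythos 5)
Your proof is correct and follows essentially the same route as the paper: isolate the $\lambda_t$-dependent factors of the PD posterior, substitute the Poisson and Gamma kernels, and read off the resulting Gamma form. The only difference is bookkeeping in the shape update: the paper writes the Gamma kernel as $\lambda_t^{c_0}e^{-d_0\lambda_t}$ (dropping the $-1$ consistently on both prior and posterior), so it reports $c_t=\alpha_0^{t}c_0+n_t+\sum_{i=1}^{t-1}\alpha_0^{t-i}n_i$ without your extra $(1-\alpha_0^{t})$ term, whereas under the standard shape--rate parameterization your expression is the careful one, and the two coincide exactly when $\alpha_0=1$.
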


\begin{proof}
At time $t$, the posterior of the rate is: 
    \begin{align}
	\pi\left( \lambda_t | n_{1:t}, \alpha_0 \right) & \propto \rho \left( n_t | \lambda_t \right) \prod_{i=1}^{t-1} \rho \left( n_i | \lambda_t \right)^{\alpha_0^{t-i}} \pi(\lambda_t)^{\alpha_0^t}
\end{align}

Keeping only the expressions that contain the unknown parameters we have that $\pi\left( \lambda_t | n_{1:t}, \alpha_0 \right) \propto$:
\begin{align} 
    &  e^{-\lambda_t} \lambda_t^{n_t} \prod_{i=1}^{t-1} \left( e^{-\lambda_t}  \lambda_t^{n_i} \right) ^{\alpha_0^{t-i}} \left( e^{-d_0 \lambda_t} \lambda_t^{c_0} \right)^{\alpha_0^t} \nonumber \implies \\
		&\pi\left( \lambda_t | n_{1:t}, \alpha_0 \right)  \propto   e^{-\lambda_t d_t} \lambda_t^{c_t }, \nonumber
\end{align}
where $c_t=\alpha_0^t c_0  + n_t + \displaystyle{\sum_{i=1}^{t-1}} \alpha_0^{t-i} n_i $ and $d_t = \alpha_0^t d_0  + 1 + \displaystyle{\sum_{i=1}^{t-1}} \alpha_0^{t-i}$. Thus, $\lambda_t | (n_t, \alpha_0) \sim G(c_t, d_t)$.
\end{proof}

\begin{lemma} \label{lem:norm}
        Assuming the prior $\left( \mu_t, \Sigma_t\right) \sim NIW (m_0, l_0, \nu_0, \Psi_0)$, i.e., a Normal-Inverse Wishart with hyperparameters $m_0$, $l_0$, $\nu_0$, and $\Psi_0$, then the PD posterior distribution is conjugate $\left( \mu_t, \Sigma_t \right) | \left( X_{1:t}, \alpha_0 \right) \sim NIW (m_t, l_t, \nu_t, \Psi_t)$, with $m_t$, $l_t$, $\nu_t$, and $\Psi_t$, the updated parameters.
\end{lemma}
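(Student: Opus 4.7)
The plan is to mirror the structure of the proof of Lemma \ref{lem:pois}, exploiting the factorization $\pi(\lambda_t,\mu_t,\Sigma_t|X_{1:t},\alpha_0)=\pi(\lambda_t|n_{1:t},\alpha_0)\pi(\mu_t,\Sigma_t|X_{1:t},\alpha_0)$ already noted in the text. From Eq.~\eqref{eq:lik2}, the only part of $f(X_i|\Theta_t)$ that depends on $(\mu_t,\Sigma_t)$ is $\prod_{j=1}^{n_i} N_d(x_{j,i}|\mu_t,\Sigma_t)$, so discarding all factors independent of $(\mu_t,\Sigma_t)$ in Eq.~\eqref{eq:post} leaves
\begin{equation*}
\pi(\mu_t,\Sigma_t|X_{1:t},\alpha_0)\propto\prod_{i=1}^{t}\Bigl(\prod_{j=1}^{n_i}N_d(x_{j,i}|\mu_t,\Sigma_t)\Bigr)^{\alpha_0^{t-i}}\pi(\mu_t,\Sigma_t)^{\alpha_0^t}.
\end{equation*}
The goal is then to show that the right-hand side has the exponential-family shape of an $NIW(m_t,l_t,\nu_t,\Psi_t)$ kernel and to read off the updated hyperparameters.

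The key algebraic step I would carry out is to use the identity $N_d(x|\mu,\Sigma)^{\alpha}\propto|\Sigma|^{-\alpha/2}\exp\!\bigl(-\tfrac{\alpha}{2}(x-\mu)^{\top}\Sigma^{-1}(x-\mu)\bigr)$ on every likelihood factor, and to substitute the $NIW(m_0,l_0,\nu_0,\Psi_0)$ prior in its canonical form, namely $\pi(\mu_t,\Sigma_t)\propto|\Sigma_t|^{-(\nu_0+d+2)/2}\exp\!\bigl(-\tfrac{1}{2}\tr(\Psi_0\Sigma_t^{-1})-\tfrac{l_0}{2}(\mu_t-m_0)^{\top}\Sigma_t^{-1}(\mu_t-m_0)\bigr)$. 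Raising this to the power $\alpha_0^t$ simply rescales each coefficient by $\alpha_0^t$. After this substitution, the posterior becomes a single exponential in $\Sigma_t^{-1}$ whose exponent is a sum of quadratic forms in $\mu_t$ weighted by $\alpha_0^{t-i}$, together with a power of $|\Sigma_t|$.

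I would then collect coefficients. Matching the quadratic in $\mu_t$ to the NIW form gives $l_t=\alpha_0^t l_0+\sum_{i=1}^{t}\alpha_0^{t-i}n_i$ directly, and completing the square identifies $m_t=\frac{1}{l_t}\bigl(\alpha_0^t l_0 m_0+\sum_{i=1}^{t}\alpha_0^{t-i}\sum_{j=1}^{n_i}x_{j,i}\bigr)$. Matching the exponent of $|\Sigma_t|$ yields $\nu_t=\alpha_0^t(\nu_0+d+2)+\sum_{i=1}^{t}\alpha_0^{t-i}n_i-(d+2)$. The remaining $\mu_t$-independent quadratic residue, together with $\alpha_0^t\Psi_0$, becomes $\Psi_t$ via the standard ``prior scale plus weighted scatter plus mean-shift correction'' identity, reorganized using $m_t$ and $l_t$.

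The main obstacle will be the bookkeeping in the last step: because each observation $x_{j,i}$ enters with its own weight $\alpha_0^{t-i}$, the completion of the square that turns $\sum_{i,j}\alpha_0^{t-i}(x_{j,i}-\mu_t)^{\top}\Sigma_t^{-1}(x_{j,i}-\mu_t)+\alpha_0^t l_0(\mu_t-m_0)^{\top}\Sigma_t^{-1}(\mu_t-m_0)$ into $l_t(\mu_t-m_t)^{\top}\Sigma_t^{-1}(\mu_t-m_t)+\tr(\Psi^{*}\Sigma_t^{-1})$ produces a weighted scatter plus a discounted prior-mean correction that must be collapsed into a single positive-definite matrix $\Psi_t$. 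Apart from this careful algebra, the proof is essentially a weighted generalization of the standard NIW--Normal conjugacy argument, so conjugacy is preserved and the stated form of the PD posterior follows.
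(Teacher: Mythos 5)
Your proposal is correct and takes essentially the same route as the paper: drop all factors not involving $(\mu_t,\Sigma_t)$, expand the $\alpha_0^{t-i}$-weighted Gaussian likelihood terms together with the powered NIW prior, complete the square in $\mu_t$, and read off $l_t$, $m_t$, $\nu_t$, $\Psi_t$, and your $l_t$, $m_t$ and the described ``prior scale plus weighted scatter minus $l_t m_t m_t^T$'' form of $\Psi_t$ agree exactly with the paper's. The only (minor) divergence is in $\nu_t$: your $\nu_t=\alpha_0^t(\nu_0+d+2)+\sum_{i=1}^{t}\alpha_0^{t-i}n_i-(d+2)$ is what the $|\Sigma_t|$-exponent bookkeeping actually yields when the full NIW density is raised to $\alpha_0^t$ (as Eq.~\eqref{eq:post} prescribes), whereas the paper states $\nu_t=\alpha_0^t\nu_0+\sum_{i=1}^{t}\alpha_0^{t-i}n_i$; the two coincide for $\alpha_0=1$ and the conjugacy claim of the lemma holds in either case.
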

    
\begin{proof}
    At time $t$, the joint posterior of the mean vector and the covariance matrix is: 
{\footnotesize
\begin{align}
	\pi\left( \mu_t, \Sigma_t | X_{1:t}, \alpha_0 \right)  & \propto \prod_{j=1}^{n_t} p \left( x_{j,t} | \mu_t, \Sigma_t \right) \prod_{i=1}^{t-1}  \left( \prod_{j=1}^{n_i} p \left( x_{j,i} | \mu_t, \Sigma_t \right) \right)^{\alpha_0^{t-i}}  \\
 & \times   \pi(\mu_t, \Sigma_t)^{\alpha_0^t} \nonumber
 \end{align} }
Keeping only any expression that contains the unknown parameters we have that $\pi\left( \mu_t, \Sigma_t | X_{1:t}, \alpha_0 \right) \propto$
{\footnotesize
\begin{align}
   & | \Sigma_t |^{-\displaystyle{\sum_{i=1}^t}  \alpha_0^{t-i} n_i / 2} \hspace{-3mm}\exp \left\{-\dfrac{1}{2} \tr \Sigma_t^{-1} \displaystyle{\sum_{j=1}^{n_i}} x_{j,i} x_{j,i}^T \right\} \nonumber \\
	& \times \exp \left\{-\dfrac{1}{2} \left( \displaystyle{\sum_{i=1}^t}  \alpha_0^{t-i} n_i -2 \displaystyle{\sum_{i=1}^t} \alpha_0^{t-i} \displaystyle{\sum_{j=1}^{n_i}} x_{j,i}^T \Sigma_t^T \mu_t \right) \right\} \nonumber \\
 	& \times | \Sigma_t |^{-\displaystyle{\sum_{i=1}^{t-1}}  \alpha_0^{t-i} n_i / 2} \exp \left\{-\dfrac{1}{2} \tr \Sigma_t^{-1} \displaystyle{\sum_{j=1}^{n_i}} x_{j,i} x_{j,i}^T \right\} \nonumber \\
  	& \times \exp \left\{-\dfrac{1}{2} \left( \displaystyle{\sum_{i=1}^{t-1}}  \alpha_0^{t-i} n_i -2 \displaystyle{\sum_{i=1}^{t-1}} \alpha_0^{t-i} \displaystyle{\sum_{j=1}^{n_i}} x_{j,i}^T \Sigma_t^T \mu_t \right) \right\} \nonumber 
 \end{align}}
 {\footnotesize
\begin{align}
	& \times | \Sigma_t |^{ - {\alpha_0^t} (\nu_0 +d +2) / 2} \exp \left\{-\dfrac{1}{2} {\alpha_0^t} \tr \Sigma_t^{-1} \Psi_0 \right\} \nonumber \\	
	& \times \exp \left\{-\dfrac{ {\alpha_0^t} l_0 }{2} \left(  \mu_t  - m_0 \right)^T \Sigma_t^{-1} \left(  \mu_t  - m_0 \right) \right\} \nonumber 
 \end{align}}
 
By doing the appropriate operations and completing the quadratic forms we have:

{\footnotesize
\begin{align}
\pi\left( \mu_t, \Sigma_t | X_{1:t}, \alpha_0 \right)	& \propto | \Sigma_t |^{ -(\nu_t +d +2) / 2} \exp \left\{-\dfrac{1}{2} \tr \Sigma_t^{-1} \Psi_t \right\} \nonumber \\	
& \times \exp \left\{-\dfrac{\lambda_t}{2} \left(  \mu_t  - m_t \right)^T \Sigma_t^{-1} \left(  \mu_t  - m_t \right) \right\} \nonumber
\end{align}
}

where $m_t = \dfrac{\alpha_0^{t} l_0 m_0 + \displaystyle{\sum_{j=1}^{n_t}} x_{j,t} + \displaystyle{\sum_{i=1}^{t-1}} \alpha_0^{t-i} \displaystyle{\sum_{i=1}^{n_i}} x_{j,i} }{\alpha_0^{t}l_0 + n_t + \displaystyle{\sum_{i=1}^{t-1}} n_i \alpha_0^{t-i}}$,\\ $l_t = \alpha_0^{t}l_0 + n_t + \displaystyle{\sum_{i=1}^{t-1}} n_i \alpha_0^{t-i}$, $\nu_t = \alpha_0^{t}\nu_0 + n_t + \displaystyle{\sum_{i=1}^{t-1}} n_i \alpha_0^{t-i}$ and \\ $\Psi_t = \alpha_0^{t}\Psi_0 + \alpha_0^{t}l_0 m_0 m_0^T  + \displaystyle{\sum_{j=1}^{n_t}} x_{j,t}x_{j,t}^T + \displaystyle{\sum_{i=1}^{t-1}} \alpha_0^{t-i} \displaystyle{\sum_{j=1}^{n_i}} x_{j,i}x_{j,i}^T-$\\
$-\dfrac{1}{ \alpha_0^{t}l_0+ n_t + \displaystyle{\sum_{i=1}^{t-1}} n_i \alpha_0^{t-i}} \times \\  \left( \alpha_0^{t}l_0 m_0 + \displaystyle{\sum_{j=1}^{n_t}} x_{j,t} + \displaystyle{\sum_{i=1}^{t-1}} \alpha_0^{t-i} \displaystyle{\sum_{j=1}^{n_i}} x_{j,i} \right)  \times $\\
$~~~~\left( \alpha_0^{t}l_0 m_0 + \displaystyle{\sum_{j=1}^{n_t}} x_{j,t} + \displaystyle{\sum_{i=1}^{t-1}} \alpha_0^{t-i} \displaystyle{\sum_{j=1}^{n_i}} x_{j,i} \right)^T$. \\[6pt]
Thus, $\left( \mu_t, \Sigma_t \right) | \left( X_{1:t}, \alpha_0 \right) \sim NIW(m_t, l_t, \nu_t, \Psi_t)$.
\end{proof}

In cases where prior knowledge is lacking, it is important to consider non-informative priors. Among these, we recommend the use of the Jeffreys' prior \cite{jeffreys1998theory}. Its density function is proportional to the square root of the determinant of the Fisher information matrix, i.e., $\pi(\Theta) \propto |\mathcal{I}(\Theta)|^{1/2}$, while, a property, which is of key importance, is its invariance under a change of coordinates for the unknown parameters. In our set-up, we have $\pi(\lambda_t) \propto \lambda_t^{-1/2}$ and $\pi \left( \mu_t, \Sigma_t \right) \propto |\Sigma_t|^{-(d+2)/2}$ for the rate and the features' parameters, respectively. The total weight of information the posterior conveys to the observed sets arises from a geometric series, and specifically, when $t \rightarrow + \infty$, then for cardinality the total weight is of $ 1 / \left( 1 - \alpha_0 \right) $, and for the features the expected weight is of $ \lambda_t / \left( 1 - \alpha_0 \right) $. 

An illustrative example of the proposed PD posterior for monitoring the cardinality distribution $\rho(n)$ is depicted in Fig. \ref{fig:card}. Specifically, the figures shows the cardinality of the received RFS observations over time, with the corresponding posterior mean shown in blue. The initial 50 observations are simulated from a $\text{Pois}(10)$ distribution, followed by a smooth increase in the rate parameter $\lambda_t$ to 12 for the subsequent 30 observations. Finally, the rate $\lambda_t$ smoothly decreases to 5 for the last 20 observations. To assess the effect of the discounting factor we set three values for $\alpha_0$, and specifically $\alpha_0 \in \{ 0.8,0.9,1\}$, while a non-informative prior is employed. We observe that the smaller the $\alpha_0$, the more the posterior mean adapts to changes.

\begin{figure}
	\centering
	\includegraphics[width=\columnwidth]{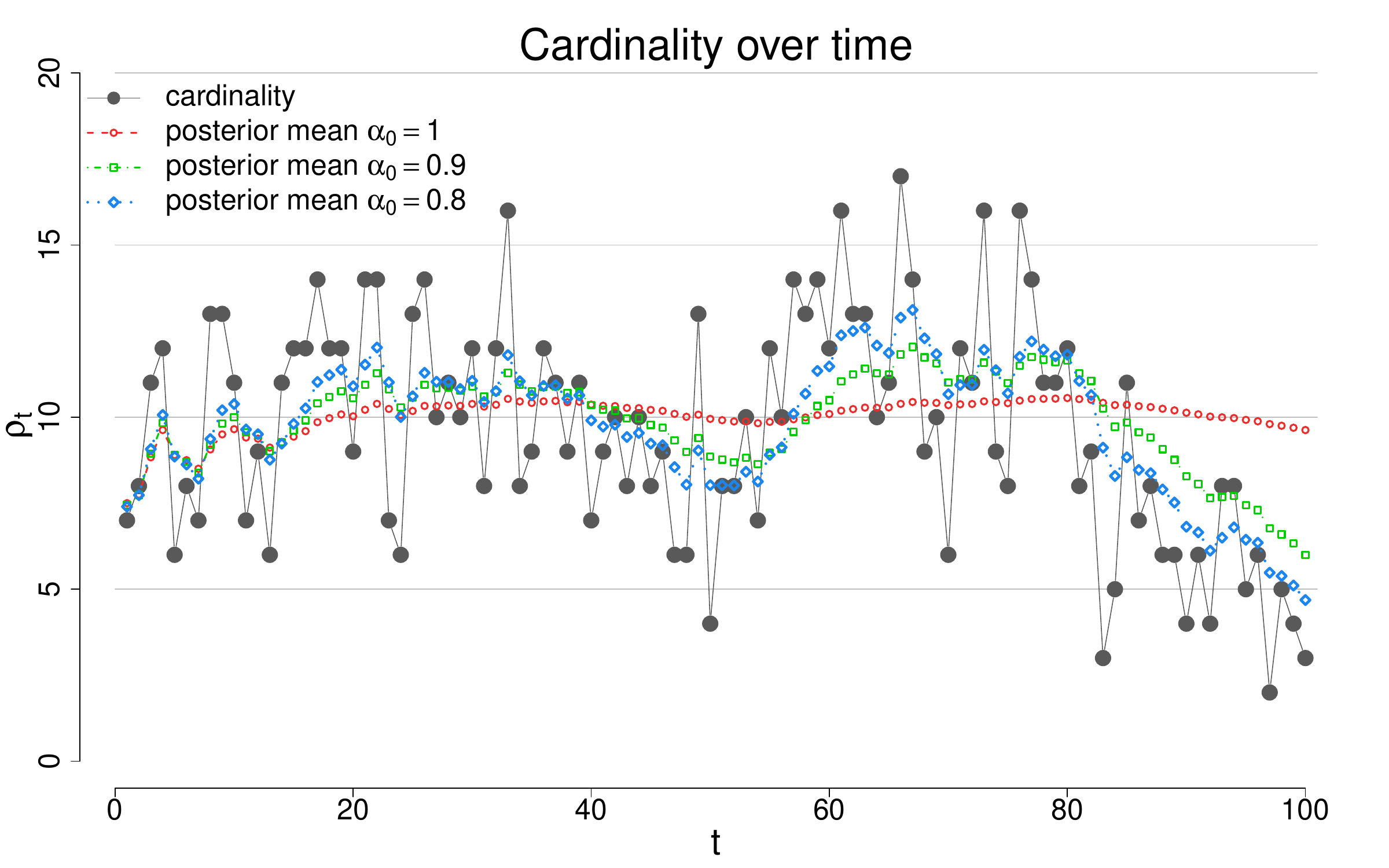}
	\caption{The figure illustrates the adaptive nature of the proposed approach i.e., how the proposed  approach tracks the cardinality, and its posterior mean in time for different values of $\alpha_0 \in \{ 0.8,0.9,1\}$. }
	\label{fig:card}
 \vspace{-5mm}
\end{figure}

\subsection{Detecting Anomalies in Sequential RFS Observations} \label{ssec:nov}

The proposed methodology for anomaly detection is based on the posterior predictive distribution, which will be the representative of the likelihood, taking into account the uncertainty for $\Theta_t=\left( \lambda_t, \mu_t, \Sigma_t\right)$. In our set-up, the general form of predictive density of $X_{t+1}$ is derived by integrating out the unknown parameters with respect to the posterior $p ( \Theta_t | X_{1:t}, \alpha_0)$. Specifically: 
{\small
\begin{align} \label{eq:predictive}
f(X_{t+1} | X_{1:t}, \alpha_0)  &=  \mathop{\mathlarger{ \mathlarger{\int_{\Theta_t }}} } f(X_{t+1} | \Theta_t) \pi ( \Theta_t | X_{1:t}, \alpha_0) d \Theta_t
\end{align}}
It is well known the non-uniformity issues of $f(X_{t+1} | \Theta_t)$ (and transition to $f(X_{t+1} | X_{1:t}, \alpha_0)$), which fails to peak at the most probable values. The Ranking Function (RF) was introduced in \cite{vo2018model} to alleviate these issues arising from the likelihood, but we tackle this problem in a different way. Specifically, considering the independence between the cardinality and the features given the cardinality, we will employ two independent predictive checks. These checks will assess anomalies for the cardinality and the features, respectively, based on their corresponding predictive distributions. Then, we will combine  the anomaly evidence of the two predictive checks into the Fisher score in Eq. \eqref{eq:fisher} for assessing the presence of anomaly in $X_{t+1}$ via \eqref{eq:alarm}. Starting from $n_{t+1}$, its posterior predictive distribution \cite{carlin2008bayesian} is a Negative Binomial: 
\begin{align} \label{eq:predcard}
 n_{t+1} | (n_{1:t}, \alpha_0) \sim NB \left( c_t, \dfrac{d_t}{d_t+1}\right). 
 \end{align}
To employ the predictive check for the cardinality, we define the set of Highest Predictive Probabilities (HPrP) as 
\begin{align} \label{eq:predR}
 \mathcal{R}(n_{t+1})  = \{ n: \rho(n | n_{1:t}, \alpha_0 ) > \rho( n_{t+1} | n_{1:t}, \alpha_0 )\}
 \end{align}
which is the set with the values of the posterior predictive with higher probability than the observed cardinality $n_{t+1}$, with $\mathcal{R}(n_{t+1})= \emptyset$ if $n_{t+1}$ is the mode of the posterior predictive density. Note that HPrP is closely related to the Highest Predictive Mass (HPrM), introduced in \cite{bourazas2022predictive} for a Bayesian approach in online anomaly detection for univariate processes. However, in our set-up, we need to define the probability that quantifies the discrepancy between the observed cardinality and its posterior predictive distribution and combine this probability with the corresponding probability of the features rather than employ a decision rule for the cardinality. Thus, we define the probability of obtaining results at least as extreme as $n_{t+1}$ by
\begin{align} \label{eq:prePDrc}
pr^{n}_{t+1} = \displaystyle{\sum_{n \notin \mathcal{R}(n_{t+1}) } \rho(n | n_{1:t}, \alpha_0)}. 
 \end{align}
The choice of HPrP in \eqref{eq:prePDrc} is optimal in the sense that minimizes the discrete measure $m(\mathcal{R}^c) = \sum_{i} \delta_{n_{i}} \left( \rho(n_{i} | n_{1:t}, \alpha_0) > \rho( n_{t+1} | n_{1:t}, \alpha_0 ) \right)$, where $\delta_{n_{i}}$ is the Delta Dirac function. In other words, HPrP is the shortest region that achieves the sum $\sum_{n \in \mathcal{R}(n_{t+1}) } \rho(n | n_{1:t}, \alpha_0)$, and consequently maximizes the region that indicates a discrepancy, an extremely useful property for anomaly detection. Essentially, $pr^{n}_{t+1}$ aggregates all the probabilities of values not belonging to $\mathcal{R}(n_{t+1})$, thus having smaller or equal probabilities than $n_{t+1}$. Large values for $pr^{n}_{t+1}$ imply that set $\mathcal{R}(n_{t+1})$ has few values, and thus $n_{t+1}$ has a relatively high probability in the distribution; small values are associated with extreme values in the posterior predictive distribution, essentially indicating an anomaly.

\begin{table*}[ht]
    \centering
    \begin{tabular}{lccccccr}
        \cline{2-7}
      {\Large (a)} &  $n_1=9$ & $n_2=7$ & $n_3=11$ & $n_4=10$ & $n_5=8$ & $n_6=16$ & \phantom{000}\\
        & $\bar{x}_1 = \begin{pmatrix} 0.21 \\ 0.05 \end{pmatrix}$ & $ \bar{x}_2 = \begin{pmatrix} 0.48 \\ -0.18 \end{pmatrix}$ & $ \bar{x}_3 = \begin{pmatrix} -0.30 \\ -0.15 \end{pmatrix}$ & $ \bar{x}_4 = \begin{pmatrix} 0.54 \\ 0.21 \end{pmatrix}$ & $ \bar{x}_5 = \begin{pmatrix} -0.26 \\ -0.09 \end{pmatrix}$ & $ \bar{x}_6 = \begin{pmatrix} 0.97  \\ 0.50 \end{pmatrix}$ \\
        \cline{2-7}
    \end{tabular}
\end{table*}

\begin{figure*}
	\centering
	\includegraphics[width=\textwidth]{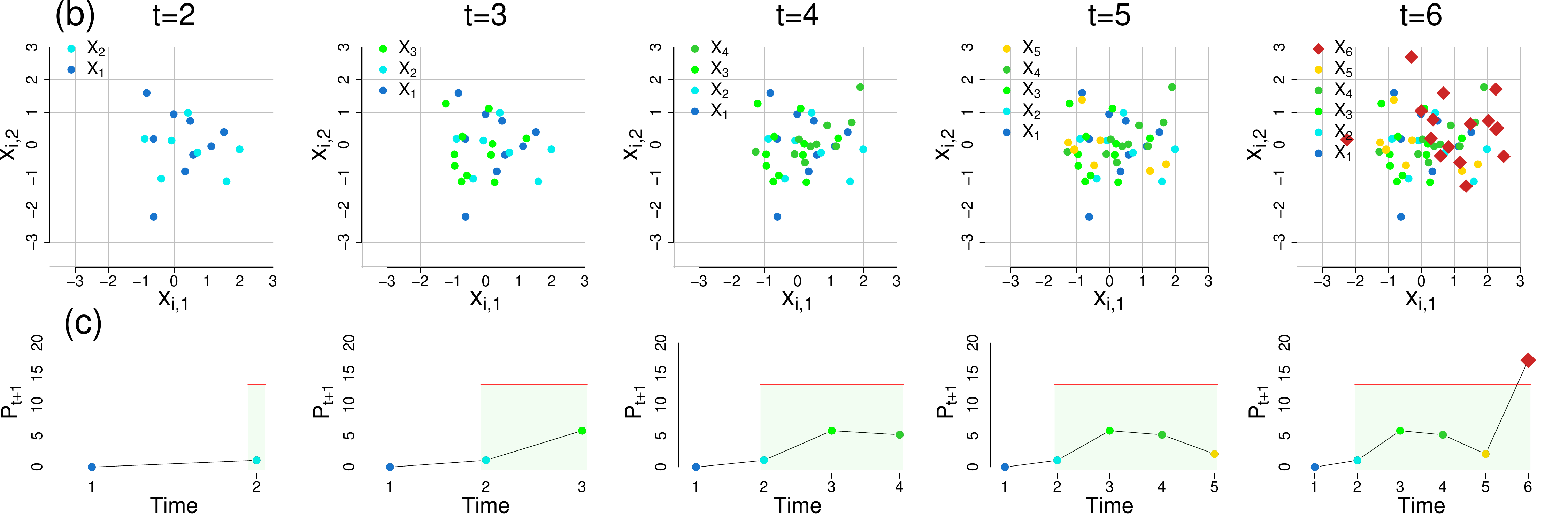}
	\caption{The figure depicts an illustrative example of the proposed approach. The top panel (a) displays the cardinality and sample mean vector of the observed RFSs over time, the center panel (b) shows scatterplots of the RFSs, and at the lower panel (c) we provide the Fisher score $P_{t+1}$ in Eq. \eqref{eq:fisher} based on the predictive checks. The Jeffreys' prior is used, while $\alpha_0=1$ for the PD in Eq. \eqref{eq:post}, and $\alpha=1/100$ for the quantile function in Eq. \eqref{eq:alarm}. In (c), the red line is the decision limit $q(1-\alpha)$, while the light green area indicates the no-anomaly region for the process. An anomalous RFS observation (marked with red $\diamond$) is detected at time-step 6 as shown in the figure.}
	\label{fig:ill}
 \vspace{-4mm}
\end{figure*}

Regarding the distribution of $pr^{n}_{t+1}$, it is discrete in the range $\left[ 0,1\right]$, as $\rho(\cdot | n_{1:t}, \alpha_0 )$ is discrete. However, under certain conditions, the distribution of $pr^{n}_{t+1}$ approximates asymptotically the standard uniform distribution $U(0,1)$. Specifically, it is well known that as $c_t \rightarrow +\infty$ and the probability of ``success'' $d_t / (d_t +1 ) \rightarrow 1$, i.e., when the posterior becomes informative, then the Negative Binomial approximates the Poisson with $\lambda_t = c_t / d_t$. For large values of $\lambda_t$, the $\mathcal{R}(n_{t+1})$ approximates the rejection region of a two-tailed test of Normal distribution, and  $pr^{n}_{t+1}$ approximates the classical two sided p-value, which follows a $U(0,1)$. But even in cases where the conditions of the approximation are not met, the distribution is uniform as possible due to its discreteness. Furthermore, the approximation would be mainly poor for values close to one, rather than close to zero, which are of main interest in detecting an anomaly, due to the large number of values with small probabilities in $\rho(\cdot | n_{1:t}, \alpha_0 )$. 

Continuing with the distribution of the features given the cardinality, we express an anomaly in terms of a disorder in the mean vector. From basic properties of the Normal distribution, it is known that the distribution of $\bar{x}_{t+1} $ is:
\begin{align} \label{eq:xbar}
\bar{x}_{t+1} \sim N \left( \mu_t, \Sigma_t / n_{t+1} \right). 
 \end{align}
Using the posterior $\pi\left( \mu_t, \Sigma_t | X_{1:t}, \alpha_0 \right)$, we derive the posterior predictive as in \eqref{eq:predictive} for the mean vector which is derived as in\cite{carlin2008bayesian}
\begin{align} 
\bar{x}_{t+1} | \left( X_{1:t}, \alpha_0 \right) \sim t_{\nu_t-d+1} \left( m_t, \dfrac{\lambda_t+1}{\lambda_t (\nu_t-d+1) n_{t+1}} \Psi_t \right), \notag
\end{align}\\
i.e., a $d$-variate $t$-Student distribution, with degrees of freedom $\nu_t-d+1$, mean vector $m_t$ and covariance matrix $\frac{\lambda_t+1}{\lambda_t (\nu_t-d+1) n_{t+1}} \Psi_t $. The predictive check for the features is based on the Hotelling statistic \cite{hotelling1947multivariate}, which is uses the Mahalanobis distance, and is admissible for testing anomalies in Normal data \cite{stein1956admissibility}. In our case, the Hotelling type statistic used will be the Mahalanobis distance between the observed sample mean vector and the posterior predictive mean vector, weighted by the posterior predictive covariance matrix, i.e.,
{\small
\begin{align} 
T^2_{t+1}= (\bar{x}_{t+1}-m_t)^T \left( \dfrac{(\lambda_t+1)d}{\lambda_t (\nu_t-d+1) n_{t+1}} \Psi_t\right)^{-1} (\bar{x}_{t+1}-m_t), \notag
\end{align}}
The distribution of $T^2_{t+1}$ test statistic is an $F$ with $d$ and $\nu_t-d+1$ degrees of freedom, i.e.,  $T^2_{t+1}\sim F_{d, \nu_t-d+1}$, with $\nu_t>d-1$ \cite{prins1997multivariate}. Significantly large values of $T^2_{t+1}$ indicate a large Mahalanobis distance, and consequently an potential anomaly. Thus, we define the probability of obtaining results at least as extreme as $\bar{x}_{t+1}$ by
\begin{align} \label{eq:prePDrf}
pr^{x|n}_{t+1} = 1 - CF_{d, \nu_t-d+1} \left( T^2_{t+1} \right), 
 \end{align}
where $CF_{d, \nu_t-d+1} \left( \cdot \right)$ is the cumulative distribution function of the F distribution with degrees of freedom $d$ and $\nu_t-d+1$. It is straightforward to prove that $pr^{x|n}_{t+1}\sim U(0,1)$ under the assumption of no anomaly. To combine the predictive checks for cardinality and features, we substitute the probabilities from \eqref{eq:prePDrc} and \eqref{eq:prePDrf} into \eqref{eq:fisher}, obtaining the Fisher test statistic $P_{t+1}$, which measures the extremeness of $X_{t+1}$ under its posterior predictive distribution. An alarm is triggered if $P_{t+1} > q(1-\alpha)$, where $q(\cdot)$ is the quantile function of the $\mathcal{X}^2_{4}$ distribution, and $\alpha$ is the false alarm rate. If $X_{t+1}$ is not OOC, it updates the posterior prediction for the next observation.

Figure \ref{fig:ill} visualizes a paradigm with a simultaneous change in the cardinality and the mean vector of the features. Precisely, for the first five RFS we generate $n_i \sim \text{Pois}(10)$, $i \in \{1,\ldots,5\}$, and $x_{j,i} \sim N_2(0_2,I_{2\times 2})$, i.e., a bivariate standard Normal distribution, where $j \in \{1,\ldots,n_i\}$. At the sixth RFS, we introduce an anomaly by increasing the rate parameter of the cardinality to 16 and adding one standard deviation to the first component of the mean vector and half a standard deviation to the second component, i.e., $x_{j,6} \sim N_2((1\;0.5)^T,I_{2\times 2})$. The Jeffreys' prior is used, while we set the discounting factor $\alpha_0=1$ (i.e., no discounting) and the false alarm rate $\alpha=1/100$. As we observed, the anomaly is detected at time $t=6$, as $X_{6}$ deviates significantly from the previous for both the cardinality ($n_{6}=16$) and the mean vector ($\bar{x}_6=(0.97\;0.50)^T$).

\section{Evaluation} \label{sec:sim}

In this section, we compare the performance of the proposed Bayesian predictive checks (PC) methodology against the ranking function (RF) introduced in \cite{vo2018model} under various scenarios of anomalies in the sequence of Poisson RFS observations. Regarding the IC process, we generate 10,000 batches of $T=30$ RFSs where the cardinality follows a $\text{Pois}(10)$, and the features follow the $N_2\left(0_2, I_{2\times2}\right)$, i.e. the bivariate standard Normal distribution. For the case of OOC data, we introduce anomalies in the IC sequences in each time-step $t \in \{2,\ldots,T\}$, drawing samples from the following 5 scenarios:  

\begin{enumerate}
    \item Spatial Density: we draw from a Normal with mean vector $\mu_t'=(1\; 1)^T$, i.e., we introduce a shift size of one standard deviation for each component in the mean vector (Fig. 3(a)).
    \item Cardinality Distribution: we draw samples from a Poisson with rate $\lambda'=20$, i.e., we introduce an increase to the cardinality rate (Fig. 3(b)).
    \item Cardinality Distribution: we draw samples from a Poisson with rate $\lambda'=2$, i.e., we introduce a decrease to the cardinality rate (Fig. 3(c)).
    \item Spatial Density and Cardinality: we draw samples from a Poisson with rate $\lambda'=15$ and a Normal with mean vector $\mu_t'=(1\; 1)^T$, i.e., we introduce a moderate increase to the cardinality rate and a moderate shift for the mean vector (Fig. 3(d)). 
    
    \item Spatial Density and Cardinality: we draw from a Poisson with rate $\lambda'=5$ and a Normal with mean vector $\mu_t'=(1\; 1)^T$, i.e., we introduce a moderate decrease to the cardinality rate and a moderate shift for the mean vector (Fig. 3(e)). 
    
\end{enumerate}

It is important to note that the 5 investigated OOC scenarios pose a significant challenge due to subtle deviations in cardinality distribution and spatial density of the features,  relative to the IC process. Next, we demonstrate how our proposed approach handles these challenging scenarios compared to existing state-of-the-art methods.



\begin{figure}
	\centering
	\includegraphics[width=0.99\columnwidth]{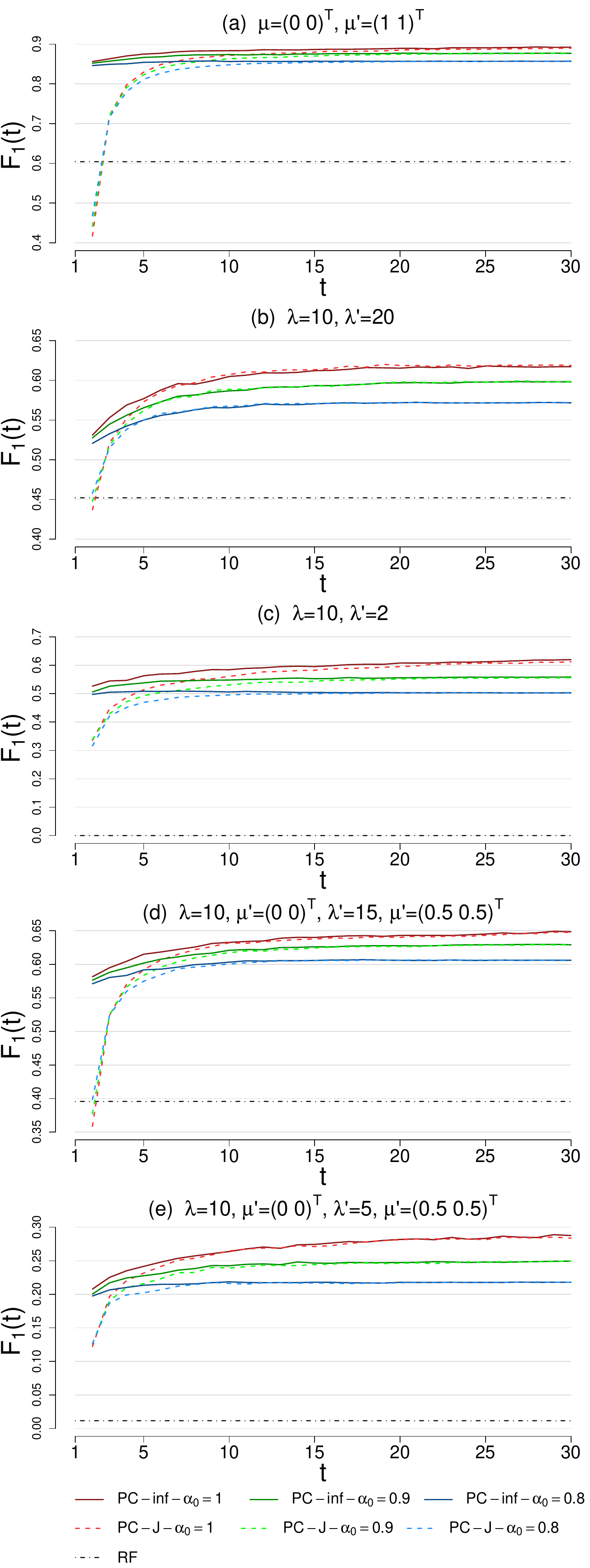}
	\caption{The $F_1(t)$ scores for $t\in \{2,\ldots,30\}$ of the proposed predictive checks (PC) and ranking function (RF) for 5 scenarios. }
	\label{fig:sim}
\end{figure}


PC requires the definition of a prior distribution, so, within this simulation study, we will take the opportunity to examine its sensitivity under the absence or the presence of prior information. Precisely, we will use the non-informative Jeffreys' prior (denoted as $J$) while for the informative prior setting (denoted as $inf$) we assume $\lambda \sim G(50.5,5)$ and $(\mu_t, \Sigma_t) \sim N(m_0 = 0_2, l_0=50, \nu_0=48, \Psi_0 = 49 \cdot I_{2\times 2})$, where $0_2$ is the zero vector and $I_{2\times 2}$ is the two-dimensional identity matrix. Note that this is the resulting posterior on average for a process with five IC RFSs using the Jeffreys' prior and setting $\alpha_0=1$. Furthermore, to assess the effect of the discounting parameter $\alpha_0$ on the performance, we set $\alpha_0 \in \{ 0.8, 0.9, 1\}$. Thus, we have $2 \times 3 = 6$ versions of PC for the different choices of the prior and $\alpha_0$. Regarding the competing method RF (shown as a dotted black line in Fig. 3), we use the theoretical values of the IC process to achieve the best possible performance. In other words, for the RF method we assume that the parameters are completely known (not estimated from a training dataset), while for PC are estimated by the information from  the prior (if available) and the IC observations until the time of an anomaly, e.g., for a test at time three we have the information from  only two RFS observations.

For a fair comparison between the two methods, we set the false alarm rate $\alpha=1/100$ for the PC, and we equivalently set the quantile of the distribution of the ranking function under the IC process to the $0.01^{th}$ level, in order to derive the corresponding decision limits. As a performance metric we calculate the $F_1(t)$, given by:
\begin{equation}
F_1(t)=2 \cdot tp_t/(2\cdot tp_t+fp_t+f_n),    
\end{equation}
where $tp_t$ and $f_n$ are the percentages of true positives and false negative for an OOC sequence, respectively, and $fp_t$ is the percentage of false positives (or false alarms) in an IC sequence at each time $t \in \{2,\ldots,30\}$. Note, that we do not provide any test for $t=1$, as PC starts testing from $t=2$, ``sacrificing'' the first observation to obtain the posterior distribution. 
As shown in Figure \ref{fig:sim}, the PC learning process enhances detection performance, particularly in non-informative settings, with prior information being especially useful when the observation horizon is short. Regarding the values of the discounting factor, $\alpha_0=1$ consistently performs better by using all observed RFS information, but other values offer comparable performance with more adaptation flexibility. Comparing methods, PC outperforms RF, achieving higher $F_1$ scores across all scenarios. Even when RF performs well (e.g., scenarios 1 and 2), PC, with a non-informative prior, requires only two or three IC observations to surpass it, assuming known parameters. When the cardinality rate decreases, RF appears incapable of detecting anomalies.
The ranking function, a weighted product of the cardinality distribution and feature likelihood, determines alarm thresholds. We raise an alarm when the ranking function's score falls below a threshold derived from the IC ranking function's quantile. However, a structural limitation of the ranking function is that when cardinality decreases, the number of likelihood factors also decreases, preventing their product from reaching very low values, even in the presence of anomalies. This limits its sensitivity, making it less effective in such cases. On the other hand, PC remains robust due to its axiomatic anomaly detection framework. Exploring robustness to distributional violations would be an interesting extension, as Bayesian methods have been shown to be resilient to such violations, particularly with informative priors \cite{bourazas2022predictive}.


\section{Conclusion} \label{sec:disc}

In this work, we developed a methodological framework to efficiently detect anomalies online for the Poisson point-patterns process while relaxing the assumption of known parameters. We deviated from the conventional approach of separating the training and testing phases; instead, we proposed a Bayesian self-starting process that sequentially estimates the unknown parameters while assessing the conformity of new sets. Additionally, we introduced a new class of power discounting posterior distributions alongside posterior predictive checks, enabling adaptive learning and robustness in detecting anomalous observations.






\balance
\bibliographystyle{ieeetr}
\bibliography{main}

\end{document}